\theoremstyle{definition}
\newtheorem{definition}{Definition}
\newcommand\subplotsize{0.4}
\begin{document}
\title{Improved Compact Genetic Algorithms with Efficient Caching}
\author{Prasanta~Dutta
        and Anirban~Mukhopadhyay
\thanks{Prasanta Dutta is with the Physics and Applied Mathematics Unit, Indian Statistical Institute, Kolkata 700108, West Bengal, India (e-mail: prasantadutta@isical.ac.in)

Anirban Mukhopadhyay is with the Department of Computer Science and Engineering, University of Kalyani, Kalyani 741235, West Bengal, India (e-mail: anirban@klyuniv.ac.in)}}
\maketitle
\begin{abstract}
Compact Genetic Algorithms (cGAs) are condensed variants of classical Genetic Algorithms (GAs) that use a probability vector representation of the population instead of the complete population. cGAs have been shown to significantly reduce the number of function evaluations required while producing outcomes similar to those of classical GAs. However, cGAs have a tendency to repeatedly generate the same chromosomes as they approach convergence, resulting in unnecessary evaluations of identical chromosomes. This article introduces the concept of caching in cGAs as a means of avoiding redundant evaluations of the same chromosomes. Our proposed approach operates equivalently to cGAs, but enhances the algorithm's time efficiency by reducing the number of function evaluations. We also present a data structure for efficient cache maintenance to ensure low overhead. The proposed caching approach has an asymptotically constant time complexity on average. The proposed method further generalizes the caching mechanism with higher selection pressure for elitism-based cGAs. We conduct a rigorous analysis based on experiments on benchmark optimization problems using two well-known cache replacement strategies. The results demonstrate that caching significantly reduces the number of function evaluations required while maintaining the same level of performance accuracy.
\end{abstract}
\begin{IEEEkeywords}
Genetic algorithm, compact genetic algorithm, elitism-based compact genetic algorithm, caching, speedup.
\end{IEEEkeywords}
\section{Introduction}
\IEEEPARstart{G}{enetic} Algorithms (GAs) are one of the most well-known population-based meta-heuristic optimization algorithms. GAs are inspired by the process of natural selection and Darwinian evolution theory \cite{goldberg1989genetic}. GAs usually start with an initial random population of solutions and try to improve it by repeatedly executing steps like selection, crossover, and mutation. GAs have been widely used for solving optimization problems in a broad range of applications like business, engineering, and science \cite{goldberg1989genetic,back1997handbook}. 

GAs are often criticized for their memory and time requirements to store and evaluate the whole population of chromosomes. Various studies have been conducted to improve these aspects without affecting the performance. As one of the initial attempts in this direction, Baluja {\em et al.} introduced the Population-Based Incremental Learning (PBIL) algorithm, which represents the whole population as a probability vector \cite{baluja1994population,baluja1995removing}. Initially, the probability vector values are set to 0.5, and the values change toward 0 or 1 as the search progresses. This probability vector is used to generate a new set of individuals. As the search progresses, the probability vector is updated and is moved toward the fittest of the generated solutions. 

PBIL and its variants are very efficient in terms of accuracy, but they require sufficient space and time to generate the optimal result. Many real-world applications require optimization in limited memory and power environments. Compact Evolutionary Algorithms (cEA) have been designed to cope with this problem. The cEA belongs to the class of Estimation of Distributed Algorithms (EDA), which does not process and store the entire population \cite{larranaga2001estimation}. Instead, it represents the population statistically, and thus it requires less memory. Harik \textit{et al.} proposed a Compact Genetic Algorithm (cGA), which is the first cEA that represents the population as a probability distribution and is operationally equivalent to the order-one behavior of simple GA with uniform crossover \cite{harik1999compact}. In cGA, at every iteration, two chromosomes are generated based on the probability vector, and the probability vector is updated based on the fittest chromosome. The cGA can also simulate higher selection pressures by generating more than two chromosomes at each iteration and using appropriate selection methods \cite{harik1999compact}. An analysis of the convergence properties of cGA by using Markov chains is given in \cite{rastegar2006step}. The Real-valued Compact Genetic Algorithm (rcGA) was introduced in \cite{mininno2008real}, which implemented the compact logic in the real-valued domain. Ahn \textit{et al.} introduced two elitism-based cGAs, viz., Persistent Elitist Compact Genetic Algorithm (pe-cGA) and Nonpersistent Elitist Compact Genetic Algorithm (ne-cGA) \cite{ahn2003elitism}. In pe-cGA, the winner chromosome is stored as an elite chromosome, which significantly reduces the number of function evaluations. However, it leads to premature convergence and does not always produce the optimal result. On the contrary, ne-cGA overcomes this problem by changing the elite chromosome after a predefined number of iterations if the same chromosome is chosen as elite multiple times. Harik \textit{et al.} proposed an Extended Compact Genetic Algorithm (ecGA) for solving deceptive problems by combining a greedy Marginal Product Model (MPM) search algorithm with a Minimum Description Length (MDL) search model \cite{harik1999linkage,harik2006linkage}. A theoretical analysis of ecGA can be found in \cite{sastry2000extended}. The real-coded version of ecGA has also been proposed in \cite{fossati2007simple,lanzi2008real}. 

It has been observed that a larger population size generally leads to better solutions in Genetic Algorithms (GAs), albeit with increased computational cost (fitness function evaluations) and memory requirements. Several studies have explored the incorporation of caching in GAs to avoid the re-evaluation of previously generated solutions. Santos \textit{et al.} developed a caching method for storing partial results in GAs, employing a divide-and-conquer algorithm for fitness function computation \cite{santos2000cache,santos2001effective}. This approach helps reduce the number of fitness function evaluations, particularly in scenarios where fitness function computation is time-consuming, and has demonstrated efficacy in applications such as protein folding. Additionally, a caching GA for spectral breakpoint matching was proposed by Mohr \textit{et al.} \cite{mohr2004caching}. The study demonstrated time savings by caching objective function values from previous evaluations. Yuen \textit{et al.} proposed a non-revisiting GA with a binary space partitioning tree archive design \cite{yuen2008genetic, su2020non}, which has shown superior performance compared to standard evolutionary algorithms. The continuous non-revisiting genetic algorithm (cNrGA) enhances search performance through search history and parameter-less adaptive mutation \cite{lou2016non}. Two pruning mechanisms, least recently used pruning and random pruning, maintain cNrGA's performance, outperforming real-coded genetic algorithms and standard particle swarm optimization. Various variants of GAs designed for faster performance have also been proposed, involving modifications to population initialization \cite{gao2008study}, identifying patterns in chromosome genes \cite{tseng2008fast}, replacing crossover with a repair phase \cite{yuan2010fast}, and adjusting the mutation rate \cite{doerr2017fast}. These modified algorithms, termed Fast Genetic Algorithms (FGAs), exhibit faster performance but are often limited to specific optimization problems. In contrast, the Fitness Value Memoization Genetic Algorithm (FVMGA), introduced by Girsang \textit{et al.} \cite{girsang2022fast}, caches previously calculated fitness values to reduce redundant computation. In comparison with traditional GAs for optimizing Long Short-Term Memory (LSTM) hyperparameters in time-series forecasting, FVMGA demonstrated a 291\% faster computation rate. It's worth noting that while FVMGA and Non-revisiting GAs focus on avoiding revisiting duplicate chromosomes to enhance GA performance, no existing approach focuses on caching techniques for cGAs to the best of our knowledge. Our proposed approach in this article aims to improve the performance of cGA and elitism-based cGAs.

As cGAs work with a single probability vector as a representation of the whole population, the chromosomes generated from it have less probability of being distinct, especially toward convergence. Therefore caching the fitness values is expected to reduce the number of function evaluations significantly. However, to the best of our knowledge, no study in the literature has yet been conducted to explore the incorporation of caching of fitness values in cGAs. Motivated by this, in this article, we introduce the concept of caching in cGA, pe-cGA, and ne-cGA with minimum system overhead. The algorithms have been modified to incorporate caching to avoid repeated evaluation of the same chromosomes. A data structure for efficient maintenance of the cache is also proposed. The effect of different cache replacement policies and cache sizes have been rigorously studied based on several performance metrics through experiments on two benchmark optimization problems. The experimental results demonstrate how the proposed cache-based cGAs (cache-based cGA, cache-based pe-cGA, cache-based ne-cGA) reduce the number of fitness function evaluations without compromising the accuracy. 

The rest of the article is organized as follows. The detailed description of cGA, pe-cGA, and ne-cGA are given in Section~\ref{background}. The proposed cache-based cGA is described in Section~\ref{proposed} along with the implementation techniques and analysis. Section~\ref{metrics} describes the metrics for the performance evaluation. The experimental results are reported in Section~\ref{results}. The article is concluded in Section~\ref{conclude}.

\section{Variants of cGA} \label{background}
In this section, we describe the basic cGA algorithm and its variants. 

\subsection{Compact Genetic Algorithm (cGA)\label{sec:cGA}}
\begin{figure}
\centering
\includegraphics[width=\linewidth]{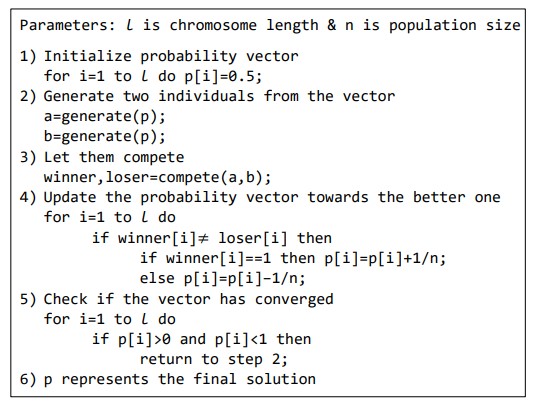}
\caption{Pseudocode of cGA}
\label{fig:cGA}
\end{figure}

The original cGA algorithm was proposed in \cite{harik1999compact}. The cGA algorithm represents the population statistically using a binary Probability Vector ($PV$) of length $l$, where $l$ is the chromosome length. Initially, the value at each gene position of $PV$ is set to 0.5. As the algorithm proceeds, two random individuals are generated from $PV$ and their fitness values are calculated. Based on the fitness values, the winner chromosome influences $PV$ based on the virtual population size, $n$. More specifically, if the winner solution has $1$ in its $i$th gene position, and the loser has $0$ in that position, then the $i$th position of $PV$ is incremented by $\frac{1}{n}$. Similarly, if the winner's $i$th gene position is $0$ and the loser's corresponding gene position is $1$, then $PV$ is decremented by $\frac{1}{n}$. If both the winner and the loser chromosomes have the same values in the $i$th gene position, then the $i$th position of $PV$ is not updated. The algorithm continues until the value in each gene position of $PV$ is either 0 or 1. For clarity, the pseudocode of cGA is given in Fig.~\ref{fig:cGA}.

\begin{figure}
\centering
\includegraphics[width=\linewidth]{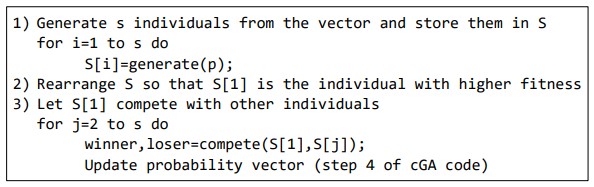}
\caption{Modification of cGA that implements tournament selection of size $s$. This would replace steps 2-4 of the cGA code}
\label{fig:cGA_t}
\end{figure}
To simulate higher selection pressure, a modified cGA with a tournament size of $s$ was proposed in \cite{harik1999compact}. Unlike the original cGA, here $s$ individuals are generated from $PV$ in every iteration. After that, the best individual with respect to the fitness function competes with other ($s-1$) individuals and updates $PV$ accordingly. The rest of the procedure is the same as cGA. The pseudocode of the selection process for this algorithm is given in Fig.~\ref{fig:cGA_t}.

\begin{figure}
\centering
\includegraphics[width=\linewidth]{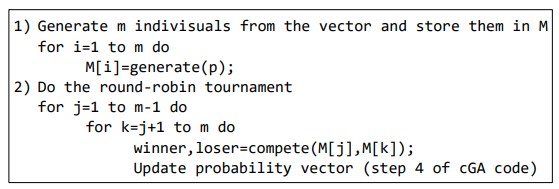}
\caption{Modification of cGA that implements a round-robin tournament. This would replace steps 2-4 of the cGA code}
\label{fig:cGA_rrt}
\end{figure}
The article \cite{harik1999compact} further proposed a round-robin tournament selection using cGA. Here instead of generating two individuals, $m$ individuals are generated at every iteration. Then every individual competes with every other individual and $PV$ is updated accordingly. The pseudocode of the selection process in this algorithm is given in Fig.~\ref{fig:cGA_rrt}.

\subsection{Elitism-Based cGA}

\begin{figure}
\centering
\includegraphics[width=\linewidth]{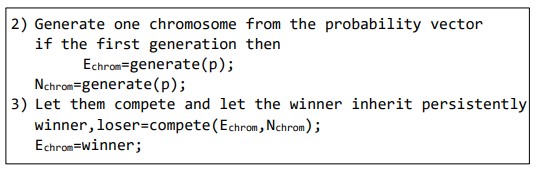}
\caption{Modification of cGA that realizes the pe-cGA. Here only the modified steps are given, the rest of the steps are same as cGA}
\label{fig:pe_cGA}
\end{figure}
Elitism-based cGA, namely persistent elitist compact genetic algorithm (pe-cGA) and nonpersistent elitist compact genetic algorithm (ne-cGA) were proposed in \cite{ahn2003elitism}. Both the algorithms are based on the idea of cGA, but they showed significantly better performance than cGA. In pe-cGA, initially, two chromosomes are generated and the winner chromosome is set as the elite chromosome. In subsequent iterations, only one chromosome is generated. This newly generated chromosome then competes with the elite chromosome and the winner chromosome is then chosen as elite. $PV$ is updated based on the winner chromosome at every iteration. The rest of the procedure is the same as cGA. The pseudocode of this algorithm is given in Fig.~\ref{fig:pe_cGA}.

\begin{figure}
\centering
\includegraphics[width=\linewidth]{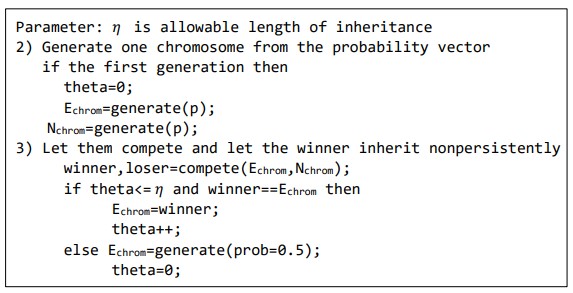}
\caption{Modification of cGA that realizes the ne-cGA. Here only the modified steps are given, the rest of the steps are same as cGA}
\label{fig:ne_cGA}
\end{figure}

In the ne-cGA scheme, the only difference is that if the elite chromosome is still not replaced after the $\eta$ number of comparisons, the elite chromosome is replaced by a newly generated chromosome irrespective of its fitness value. The pseudocode of ne-cGA is given in Fig.~\ref{fig:ne_cGA}. The preference of pe-cGA and ne-cGA depends on the problems. 

Both pe-cGA and ne-cGA have a better convergence rate, but they can not outperform the performance of cGA in terms of accuracy.

\section{Proposed cGA with Efficient Caching}\label{proposed}

The cGA and elitism-based cGA algorithms maintain a probability vector to represent the whole population in a compact form. Thus the memory requirement for these algorithms is $l \times log_2{(n+1)}$ \cite{harik1999compact}, where $l$ is the length of a chromosome (number of genes in a chromosome) and $n$ is the population size (number of chromosomes in the population). As new chromosomes are generated based on the probability vector, the probability of getting distinct chromosomes is relatively low compared to a simple GA. If we maintain a \textit{cache} and efficiently update it, we can reduce the number of fitness function evaluations significantly. It will be helpful where fitness function evaluation is a costly procedure in terms of time and space. In this section, we describe our proposed algorithm. Before describing our actual algorithm, some terms are defined as follows- 

{\em Cache capacity} denotes the maximum number of elements that can be accommodated inside the \textit{cache}. {\em Cache length} denotes the number of elements inside the {\em cache} at a particular time instance. {\em Fitness function} denotes the function based on which the fitness value of a chromosome is evaluated. {\em Fitness value} indicates the evaluated value of a chromosome based on a fitness function.

\subsection{Algorithm}

\begin{figure}
\centering
\includegraphics[width=\linewidth]{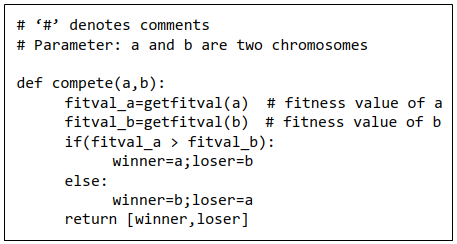}
\caption{Illustration of \textit{compete} function}
\label{fig:compete}
\end{figure}

\begin{figure}
\centering
\includegraphics[width=\linewidth]{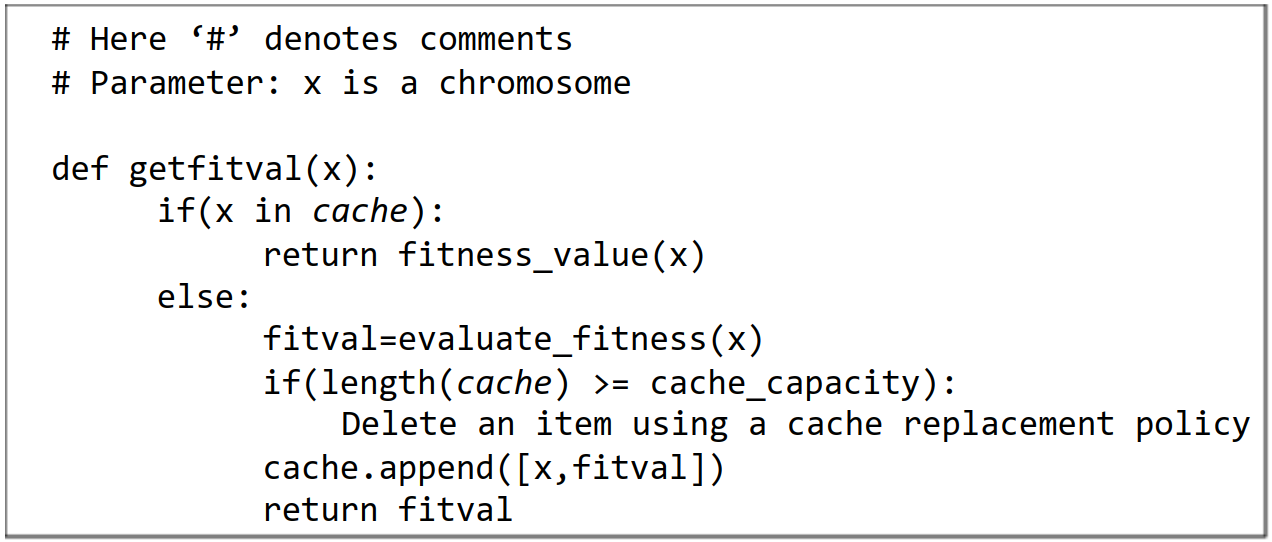}
\caption{This function returns the fitness value of a chromosome using a cache}
\label{fig:getfitval}
\end{figure}

In our proposed method, a \textit{cache} of fixed capacity is maintained. Each cache line contains a chromosome of length $l$ and the chromosome's fitness value based on a fitness function. When the fitness function of a chromosome $x$ has to be evaluated, the cache is checked first. If $x$ is found inside the cache, i.e. a cache hit occurs, the fitness value of $x$ is brought from the cache, instead of fitness function evaluation. If $x$ is not found inside the cache, i.e. a cache miss occurs, the fitness function of $x$ is evaluated and the fitness value of $x$ is stored inside the cache to the position corresponding to $x$, provided that the cache is not full. If the cache is full, some cache replacement policies are applied to accommodate $x$ along with its fitness value. In this work, two cache replacement policies, FIFO (First In First Out) and LRU (Least Recently Used) are used. In FIFO, the oldest chromosome is deleted from the cache to accommodate $x$. In LRU, the chromosome that is least recently accessed is deleted to accommodate $x$. The rest of the procedure is the same as cGA or elitism-based cGA (pe-cGA or ne-cGA). The pseudocode of our proposed method is given in Fig.~\ref{fig:getfitval}. Also, the illustration of the \textit{compete} function is given in Fig.~\ref{fig:compete}.

From the above discussion, it is clear that each element of the cache is a \textit{key-value} pair, where a chromosome is the \textit{key}, and its fitness value is the corresponding \textit{value}. In the next subsection, we discuss an efficient data structure for maintaining the cache. 

\subsection{Data Structure for the Cache}

\begin{figure}
\centering
\includegraphics[width=\linewidth]{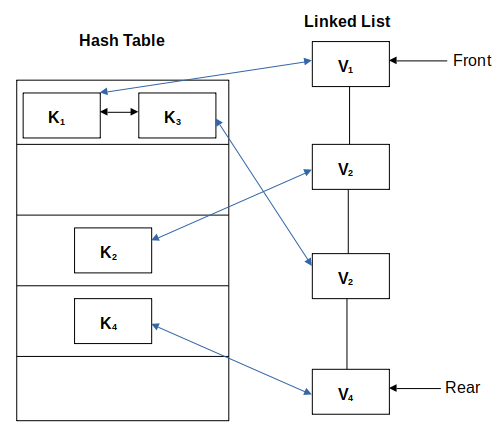}
\caption{Logical diagram of the cache implementation with cache length 4}
\label{fig:cache}
\end{figure}

A data structure is needed for the cache using which the fitness value of a chromosome can be fetched in constant time (on average) with a reasonably small space complexity. Hence we propose a hash table with a linked list as a suitable data structure to implement the cache. The hash table is used for faster lookup, whereas the linked list is used to maintain a priority queue needed at the time of cache replacement when the cache is full. Here the keys (chromosomes) are inserted (hashed) into the hash table with links that point to the nodes containing their respective fitness values (blue links in Fig.~\ref{fig:cache}). The blue links have to be two-directional because when a fitness value has to be deleted (at the time of cache replacement), the corresponding key field (chromosome) has also to be deleted. The logical diagram of the data structure is depicted in Fig.~\ref{fig:cache}. 

Here two keys may be hashed into the same location of the hash table. Chaining is used as a collision resolution technique. The direction of the arrows in the linked list is not displayed as it depends on the cache replacement strategy. In the FIFO cache replacement strategy, the new elements (nodes) are inserted into the rear position of the linked list, and the elements (nodes) are deleted from the front position when the cache is full. Hence a singly-linked list is sufficient for the implementation since no node has to be deleted from the intermediate position. In the case of LRU cache replacement, the strategy is similar to FIFO. However, here when a cache hit occurs, the node may be deleted (it is not deleted in the case of the last node) from the intermediate position and inserted into the rear of the linked list. Hence, in this case, a doubly-linked list is needed since the link of the predecessor node of the node to be deleted has to be updated in constant time. If two different chromosomes (keys) have the same fitness value (value), they will be treated separately, as shown in Fig.~\ref{fig:cache}.  Though $K_2$ and $K_3$ have the same value $V_2$, they are inserted separately in the linked list (Since $K_2$ and $K_3$ are different).  

\subsection{Analysis}
In this subsection, we discuss the reason for choosing FIFO and LRU as the cache replacement strategies. We also discuss the space and time complexities and the accuracy of our proposed method. 

\subsubsection{FIFO and LRU as the Cache Replacement Strategies}
In cGA, the chromosomes are generated based on the probability vector. Hence the probability of generating distinct chromosomes is relatively low for a particular probability vector instance (snapshot of the probability vector at a specific time). As the search progresses, the probability vector also changes and generates new chromosomes. The chance of getting distinct chromosomes becomes lower when the probability vector values converge toward 0 or 1. For this reason, FIFO is suggested as one of the cache replacement strategies in our proposed method. It keeps local copies of the most recently generated chromosomes, thus ensuring more cache hits. LRU generalizes the concept by storing the copies of the most recently used chromosomes inside the cache.

\subsubsection{Accuracy and Convergence}
The accuracy and the rate of convergence of cGA with and without caching are the same since the actual algorithm of cGA is not modified here. In our proposed method, the number of fitness function evaluations is reduced by introducing a cache. The situation is also the same for the elitism-based cGA (pe-cGA, ne-cGA) with and without cache.

\subsubsection{Time Complexity}
When a new chromosome is generated based on the probability vector, the cache is searched first for its fitness value. As the hash table is used for searching, it takes asymptotically constant time complexity on average, considering the number of hash table slots is at least proportional to the number of elements in the table \cite{cormen2009introduction}. However, the worst-case time complexity for searching the hash table is asymptotically linear. This situation can be avoided by choosing a good hash function, which distributes the keys uniformly. Some hash functions may work well for a particular type of input but perform poorly for another type. This can be avoided if we choose hash functions from the universal set of hash functions \cite{carter1979universal}. A dynamic perfect hashing scheme can also be used that provides asymptotically constant time complexity for lookup in the worst case and asymptotically constant amortized expected time for insertion and deletion operations \cite{dietzfelbinger1994dynamic}. 

When there's a cache miss and the cache is at full capacity, an element is removed from the cache, and the new element is added to the end. This operation exhibits asymptotically constant time complexity in the worst-case scenario. In the FIFO (First-In-First-Out) cache replacement strategy, when the cache reaches its limit, the first element in the linked list is removed. This operation also has an asymptotically constant time complexity in the worst-case scenario. In contrast, with the LRU (Least Recently Used) cache replacement strategy, when a cache hit occurs, the corresponding element is relocated to the end of the linked list. Because a doubly linked list is employed, the time complexity remains asymptotically constant in the worst-case scenario. 

From the above discussion, it is clear that the overall time complexity of cache maintenance is bounded by the time complexity of the Hash Table maintenance. 

\subsubsection{Space Complexity}
Let the cache capacity be $L_c$. Then the hash table contains at most $L_c$ number of entries, and each entry contains a key (chromosome) of length $l$ (say). Hence the space needed for the hash table is $O(L_c \times l)$. The linked list contains at most $L_c$ fitness values. If each fitness value can be represented by $b$ bits, then the cache's overall space complexity is $O(L_c \times l) + O(L_c \times b)$ (The space complexity of a single pointer is O(1), hence nullified asymptotically). As $b$ is constant (bits required to store floating-point numbers), the complexity becomes $O(L_c \times l) + O(L_c)$, which is $O(L_c \times l)$ according to the property of $O$ notation. 

If $L_c$ is proportional to $\log_2(n+1)$, i.e., $L_c = \alpha \times \log_2(n+1) + \beta$ (where $\alpha$ and $\beta$ are constants), the space complexity for our cache would be $O(l \times \log_2(n+1))$. It ensures that the space complexity remains the same asymptotically in the overall process with or without cache.

\section{Metrics for Performance Evaluation}\label{metrics}
As the convergence and accuracy are the same for the algorithms with and without cache, our experimental results are compared in terms of the total number of fitness function evaluations. Before going further, here two more metrics, viz., \textit{Hitratio} and \textit{Speedup} are defined based on which the performance of cache-based and corresponding non-cache-based algorithms are compared. 

\begin{definition}
{\em Hitratio} is the ratio of the number of cache hits to the total number of cache accesses. It is defined as follows:
\begin{equation}
Hitratio = \frac{h}{h + m},
\label{hr}
\end{equation} 
where $h$ and $m$ are the total number of cache hits and total number of cache misses, respectively. It can be represented in percentage as well by multiplying it by 100 ({\em Hitratio(\%)}).
\end{definition}

\begin{definition}
{\em Speedup} is the ratio of the number of function evaluations in non-cache-based algorithms to that in the corresponding cache-based algorithms. It is defined as per the following equation.
\begin{equation}
Speedup = \frac{neval}{neval_{cache}},
\end{equation}
where $neval$ and $neval_{cache}$ are the total number of fitness function evaluations without cache and with cache, respectively. {\em Speedup} implies how well the cache-based algorithms perform over the equivalent non-cache-based algorithms.
\end{definition}

In our proposed method, the fitness function for a chromosome is evaluated only in case of a cache miss. So the total number of fitness function evaluations is equal to the number of cache misses, $m$. Hence, in the case of the corresponding non-cache-based algorithm, the total number of fitness function evaluations should be equal to $h+m$. Hence it follows that the number of fitness function evaluations with cache is $neval_{cache} = m$, whereas the
number of fitness function evaluations without cache is $neval = h+m$.

Therefore the {\em Speedup} equation can be written as:
\begin{equation}
Speedup = \frac{neval}{neval_{cache}} = \frac{h+m}{m} = 1 + \frac{h}{m}.
\label{su}
\end{equation}
Hence the speedup increases with the increase in the number of cache hits, or equivalently with the decrease in the number of cache misses.

\newtheorem{lemma}{Lemma}
\begin{lemma}\thlabel{hit}
{\em Hitratio(\%)} is the measure of the percentage of reduction in the number of fitness function evaluations. 
\end{lemma}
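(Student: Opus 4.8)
The plan is to unwind the definition of ``percentage of reduction in the number of fitness function evaluations'' and show it coincides, symbol for symbol, with $Hitratio(\%)$ as defined in Equation~\eqref{hr}. The percentage reduction is the natural quantity
\[
\frac{neval - neval_{cache}}{neval}\times 100,
\]
i.e.\ the number of evaluations saved by the cache, expressed as a fraction of the evaluations that would have been performed without it, scaled to a percentage. So the first step is simply to write down this expression and fix it as the formal meaning of the statement.

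Next I would substitute the two identities already established just above the lemma, namely $neval = h+m$ and $neval_{cache} = m$, where $h$ and $m$ are the total numbers of cache hits and cache misses. These identities follow from the fact that in the proposed method a fitness evaluation is triggered exactly on a cache miss, so the cache-based run performs $m$ evaluations, while the equivalent cache-free run would have to evaluate every one of the $h+m$ accessed chromosomes. After substitution the numerator $neval - neval_{cache}$ collapses to $(h+m)-m = h$, so the percentage reduction becomes $\frac{h}{h+m}\times 100$.

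Finally I would observe that $\frac{h}{h+m}\times 100$ is precisely $Hitratio(\%)$ by Equation~\eqref{hr} and the accompanying remark that multiplying $Hitratio$ by $100$ yields $Hitratio(\%)$. This establishes the claimed equality and hence the lemma. (As an aside, the same chain of equalities also re-derives $Speedup = 1 + h/m$ from Equation~\eqref{su}, giving a sanity check: reduction percentage and speedup are two equivalent encodings of the hit/miss split.)

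There is no real obstacle here; the content of the lemma is definitional once the two counting identities $neval = h+m$ and $neval_{cache} = m$ are in hand, and those are argued in the paragraph preceding the lemma. The only point requiring any care is to state explicitly what ``percentage of reduction'' means, so that the reader sees the claim is an identity rather than an approximation, and to make sure the scaling by $100$ is tracked consistently so that the left- and right-hand sides are both genuinely percentages.
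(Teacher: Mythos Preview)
Your proposal is correct and follows essentially the same approach as the paper: write the percentage reduction as $\frac{(h+m)-m}{h+m}\times 100$, simplify to $\frac{h}{h+m}\times 100$, and identify this with $Hitratio(\%)$ via Equation~\eqref{hr}. The paper's proof is just the bare chain of equalities without your surrounding commentary, but the substance is identical.
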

\begin{proof}
As discussed above, the number of fitness function evaluations for the cache-based algorithms is $m$, and that for corresponding non-cache-based algorithms is $h+m$. Hence the percentage of reduction in the number of fitness function evaluations $reduct\_eval(\%)$ can be given as
\begin{align*}
reduct\_eval(\%) &= \frac{(h+m)-m}{h+m} \times 100 \\
&= \frac{h}{h+m} \times 100\\
&= Hitratio(\%) \\
\end{align*}
\end{proof}
Hence, {\em Hitratio(\%)} is an important metric that not only denotes the percentage of cache hits but also denotes the percentage of reduction in the number of fitness function evaluations.

\begin{lemma}\thlabel{hit_speedup}
$Hitratio = 1 - \frac{1}{Speedup}$ and $Speedup = \frac{1}{1-Hitratio}$.
\end{lemma}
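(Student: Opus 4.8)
The plan is to work directly from the two relations already derived in the excerpt, namely $Hitratio = \frac{h}{h+m}$ from~\eqref{hr} and $Speedup = 1 + \frac{h}{m} = \frac{h+m}{m}$ from~\eqref{su}, and simply to eliminate the cache-hit count $h$ and cache-miss count $m$ between them. Since both quantities are written purely as ratios of $h$ and $m$, the two claimed identities should collapse to a short algebraic manipulation rather than requiring any new idea.

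First I would take the reciprocal of the $Speedup$ expression to get $\frac{1}{Speedup} = \frac{m}{h+m}$, and then compute $1 - \frac{1}{Speedup} = \frac{(h+m)-m}{h+m} = \frac{h}{h+m}$, recognizing the right-hand side as $Hitratio$ by~\eqref{hr}; this gives the first identity. For the second, I would rearrange the first: from $Hitratio = 1 - \frac{1}{Speedup}$ we obtain $\frac{1}{Speedup} = 1 - Hitratio$, and inverting both sides yields $Speedup = \frac{1}{1 - Hitratio}$. I expect to present this as two or three displayed lines of equalities.

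The only point requiring a moment of care — essentially the sole ``obstacle'' — is legitimacy of the divisions: $Speedup$ and $\frac{1}{1-Hitratio}$ are well defined precisely when $m > 0$, i.e. when at least one cache miss (equivalently, at least one genuine fitness evaluation) has occurred, which always holds in any run since the cache begins empty; in that regime $Hitratio < 1$, so $1 - Hitratio \neq 0$. With that caveat noted, the argument is a routine substitution and I would keep the proof to a few lines.
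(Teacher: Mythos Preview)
Your proposal is correct and follows essentially the same approach as the paper: the paper also works directly from~\eqref{hr} and~\eqref{su}, writing $Hitratio = \frac{h}{h+m} = \frac{(h+m)-m}{h+m} = 1 - \frac{m}{h+m} = 1 - \frac{1}{Speedup}$ and then inverting for the second identity. The only difference is cosmetic---you start from $Speedup$ and arrive at $Hitratio$, whereas the paper starts from $Hitratio$---and your remark on the $m>0$ well-definedness is a small refinement the paper omits.
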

\begin{proof}
It follows from Eqn.~\ref{hr} and \ref{su} that
\begin{align*}
Hitratio &= \frac{h}{h+m}, \\
		&= \frac{(h+m)-m}{h+m}, \\
		&= 1 - \frac{m}{h+m}, \\
		&= 1 - \frac{1}{Speedup}.
\end{align*}
Alternatively, 
\begin{align*}
Speedup &= \frac{1}{1 - Hitratio}.
\end{align*}
\end{proof}

The above lemma shows that when {\em Speedup} increases, {\em Hitratio} also increases, and vice-versa.

\newtheorem{corollary}{Corollary}
\begin{corollary}
$1 \leq Speedup \leq \infty$.
\end{corollary}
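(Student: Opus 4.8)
The plan is to read off both bounds directly from the two closed-form expressions for $Speedup$ already derived in the excerpt, namely $Speedup = 1 + \frac{h}{m}$ from Eqn.~\ref{su} and $Speedup = \frac{1}{1 - Hitratio}$ from \thref{hit_speedup}. First I would dispose of the lower bound: $h$ and $m$ are counts of cache hits and cache misses, so $h \ge 0$; moreover the very first chromosome whose fitness is requested cannot be in the (initially empty) cache, so at least one miss always occurs, i.e.\ $m \ge 1$. Hence $\frac{h}{m}$ is a well-defined nonnegative real, and $Speedup = 1 + \frac{h}{m} \ge 1$, with equality exactly when $h = 0$, i.e.\ when no generated chromosome is ever re-evaluated.

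For the upper bound I would argue that $Speedup$ is unbounded above --- which is what the endpoint ``$Speedup \le \infty$'' is meant to assert --- rather than attained by any run. From $Hitratio = \frac{h}{h+m}$ together with $m \ge 1$ we get $Hitratio < 1$ for any finite run, so $Speedup = \frac{1}{1-Hitratio}$ is finite; but since a cGA approaching convergence can generate the same chromosome an arbitrarily large number of times against a bounded number of distinct ones, $h$ can be made arbitrarily large relative to $m$, driving $\frac{h}{m} \to \infty$ and $Hitratio \to 1$. Thus no finite constant upper-bounds $Speedup$, and $\sup Speedup = \infty$.

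The only subtlety --- and the step I would state explicitly --- is the claim $m \ge 1$: without it neither $1 + \frac{h}{m}$ nor $\frac{1}{1-Hitratio}$ is defined, and the lower bound would be vacuous. This follows from the caching mechanism of Fig.~\ref{fig:getfitval}: the cache starts empty, so the first lookup is necessarily a miss. Everything else is immediate from nonnegativity of the hit/miss counts and the algebraic identities already proved, so I do not expect a real obstacle beyond being careful about the meaning of the $\infty$ endpoint.
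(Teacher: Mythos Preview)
Your argument is correct, but it is organized differently from the paper's. The paper proceeds purely algebraically from the range $0 \le Hitratio \le 1$ together with \thref{hit_speedup}: substituting $Hitratio = 1 - \tfrac{1}{Speedup}$ into each side of that double inequality and rearranging gives $Speedup \ge 1$ and $Speedup \le \infty$ in two lines, without ever revisiting $h$, $m$, or the cache mechanism. You instead take the lower bound directly from the additive form $Speedup = 1 + \tfrac{h}{m}$ of Eqn.~\ref{su}, and you supply the operational justification $m \ge 1$ (first lookup on an empty cache is a compulsory miss) that the paper leaves implicit. For the upper end you go further than the paper does: rather than treating ``$\le \infty$'' as a tautology, you argue that $Speedup$ is finite on every run yet unbounded across runs, so that $\infty$ is a genuine supremum. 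Your version is therefore a bit more careful about well-definedness and about what the $\infty$ endpoint actually asserts; the paper's version is shorter and stays entirely within the two identities already established.
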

\begin{proof}
$$0 \leq Hitratio \leq 1,$$
From Lemma~\ref{hit_speedup}
$$ \therefore 0 \leq 1 - \frac{1}{Speedup} \leq 1.$$
From the above we get,
\begin{align*}
0 \leq 1 - \frac{1}{Speedup},
~\mbox{or},~ \frac{1}{Speedup} \leq 1,
~\mbox{or},~ Speedup \geq 1.
\end{align*}
and
\begin{align*}
1 - \frac{1}{Speedup} \leq 1,
~\mbox{or},~ 0 \leq \frac{1}{Speedup},
~\mbox{or},~ Speedup \leq \infty
\end{align*}
\end{proof}

\section{Experimental Results}\label{results}

\begin{figure*}
     \centering
     \begin{subfigure}{\subplotsize\linewidth}
         \includegraphics[width=\linewidth]{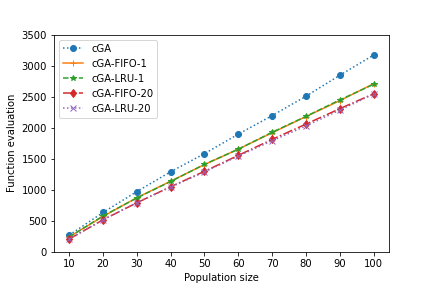}
         \caption{100-bit onemax problem}
         \label{fig:onemax_cGA_eval_n}
     \end{subfigure}
     \begin{subfigure}{\subplotsize\linewidth}
         \includegraphics[width=\linewidth]{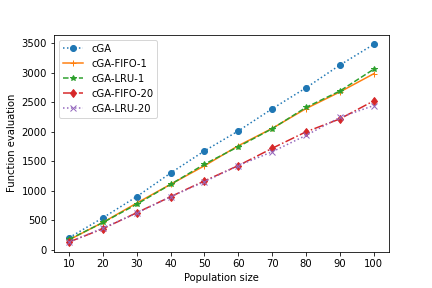}
         \caption{30-bit binary integer problem}
         \label{fig:binint_cGA_eval_n}
     \end{subfigure}
     \begin{subfigure}{\subplotsize\linewidth}
         \includegraphics[width=\linewidth]{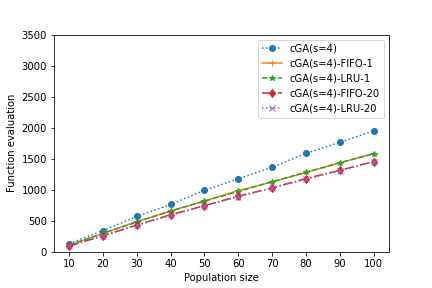}
         \caption{100-bit onemax problem}
         \label{fig:onemax_cGAt4_eval_n}
     \end{subfigure}
     \begin{subfigure}{\subplotsize\linewidth}
         \includegraphics[width=\linewidth]{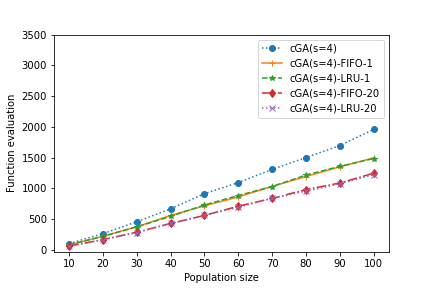}
         \caption{30-bit binary integer problem}
         \label{fig:binint_cGAt4_eval_n}
     \end{subfigure}
     \begin{subfigure}{\subplotsize\linewidth}
         \includegraphics[width=\linewidth]{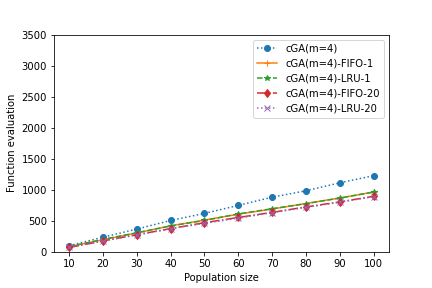}
         \caption{100-bit onemax problem}
         \label{fig:onemax_cGArrt4_eval_n}
     \end{subfigure}
     \begin{subfigure}{\subplotsize\linewidth}
         \includegraphics[width=\linewidth]{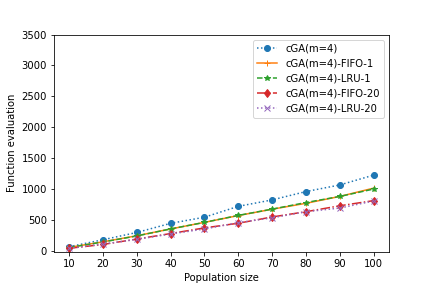}
         \caption{30-bit binary integer}
         \label{fig:binint_cGArrt4_eval_n}
     \end{subfigure}
     \caption{The plots illustrate the number of fitness function evaluations by cGA and cGA with cache for the 100-bit onemax problem and 30-bit binary integer problem with cache capacities of 1 and 20. The algorithms were executed for population sizes varying from 10 to 100 with an increment of 10}
        \label{fig:cGA_n}
\end{figure*}

\begin{figure*}
     \centering
     \begin{subfigure}{\subplotsize\linewidth}
         \includegraphics[width=\linewidth]{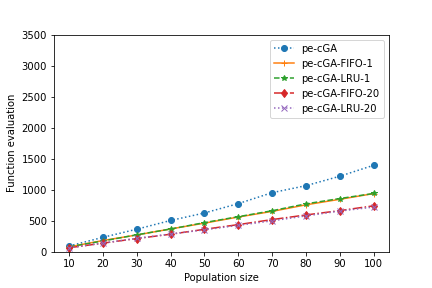}
         \caption{100-bit onemax problem}
         \label{fig:onemax_pecGA_eval_n}
     \end{subfigure}
     \begin{subfigure}{\subplotsize\linewidth}
         \includegraphics[width=\linewidth]{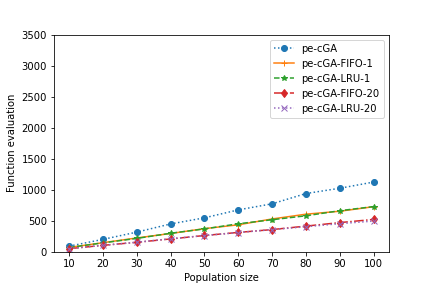}
         \caption{30-bit binary integer problem}
         \label{fig:binint_pecGA_eval_n}
     \end{subfigure}
     \begin{subfigure}{\subplotsize\linewidth}
         \includegraphics[width=\linewidth]{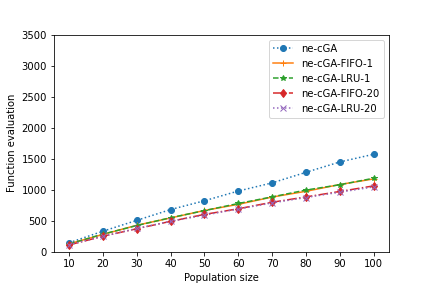}
         \caption{100-bit onemax problem}
         \label{fig:onemax_necGA_eval_n}
     \end{subfigure}
     \begin{subfigure}{\subplotsize\linewidth}
         \includegraphics[width=\linewidth]{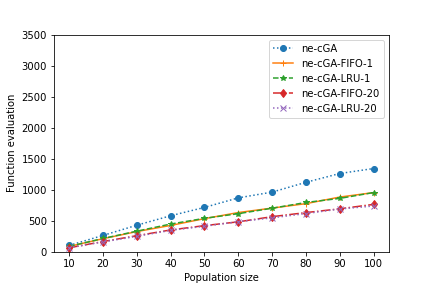}
         \caption{30-bit binary integer problem}
         \label{fig:binint_necGA_eval_n}
     \end{subfigure}
     \caption{The plots illustrate the number of fitness function evaluations by elitism-based cGA and elitism-based cGA with cache for the 100-bit onemax problem and 30-bit binary integer problem with cache capacities of 1 and 20. The algorithms were executed for population sizes varying from 10 to 100 with an increment of 10}
        \label{fig:elite_cGA_n}
\end{figure*}

\begin{figure*}
     \centering
     \begin{subfigure}{\subplotsize\linewidth}
         \includegraphics[width=\linewidth]{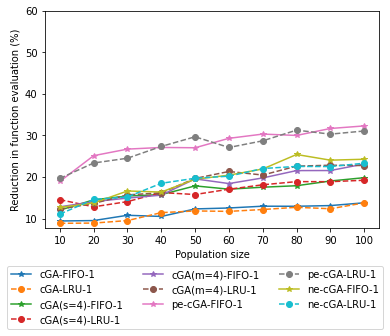}
         \caption{100-bit onemax problem}
         \label{fig:onemax_hit_c1_n}
     \end{subfigure}
     \begin{subfigure}{\subplotsize\linewidth}
         \includegraphics[width=\linewidth]{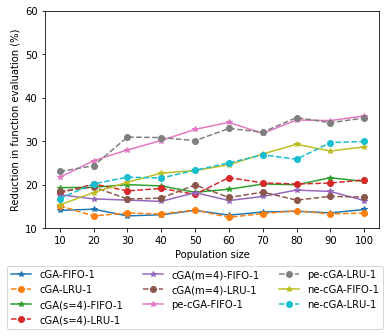}
         \caption{30-bit binary integer problem}
         \label{fig:binint_hit_c1_n}
     \end{subfigure}
     \begin{subfigure}{\subplotsize\linewidth}
         \includegraphics[width=\linewidth]{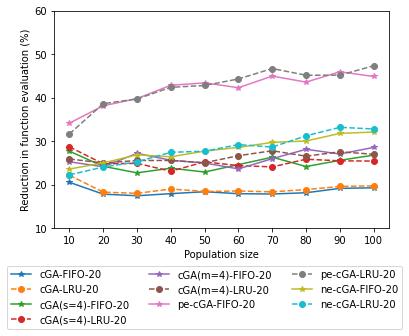}
         \caption{100-bit onemax problem}
         \label{fig:onemax_hit_c20_n}
     \end{subfigure}
     \begin{subfigure}{\subplotsize\linewidth}
         \includegraphics[width=\linewidth]{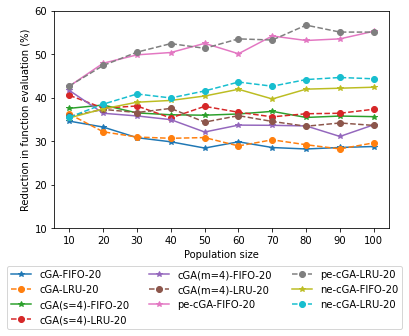}
         \caption{30-bit binary integer problem}
         \label{fig:binint_hit_c20_n}
     \end{subfigure}
     \caption{The plots illustrate the percentage of reduction in fitness function evaluation due to the use of cache by cGA and elitism-based cGA for the 100-bit onemax problem and 30-bit binary integer problem with cache capacity of 1 and 20. The algorithms were executed for population sizes varying from 10 to 100 with an increment of 10}
        \label{fig:hit_n}
\end{figure*}

\begin{table*}
\centering
\caption{{\em Speedup} achieved in terms of the number of fitness function evaluations due to the use of cache by cGA and elitism-based cGA with respect to different population sizes}
\label{tab:speedup_n}
\resizebox{\textwidth}{!}{%
\begin{tabular}{|c|l|c|l|l|l|l|l|l|l|l|l|l|l|}
\hline
\multicolumn{3}{|c|}{Population Size =\textgreater{}} & \multicolumn{1}{c|}{10} & \multicolumn{1}{c|}{20} & \multicolumn{1}{c|}{30} & \multicolumn{1}{c|}{40} & \multicolumn{1}{c|}{50} & \multicolumn{1}{c|}{60} & \multicolumn{1}{c|}{70} & \multicolumn{1}{c|}{80} & \multicolumn{1}{c|}{90} & \multicolumn{1}{c|}{100} & Average \\ \hline
\multirow{20}{*}{\begin{tabular}[c]{@{}c@{}}Onemax \\ Problem\end{tabular}} & \multirow{10}{*}{\begin{tabular}[c]{@{}l@{}}Cache\\ Length\\ 1\end{tabular}} & cGA-FIFO-1 & \multicolumn{1}{c|}{1.106839} & \multicolumn{1}{c|}{1.10677} & \multicolumn{1}{c|}{1.123163} & \multicolumn{1}{c|}{1.119271} & \multicolumn{1}{c|}{1.142201} & \multicolumn{1}{c|}{1.145342} & \multicolumn{1}{c|}{1.151185} & \multicolumn{1}{c|}{1.151527} & \multicolumn{1}{c|}{1.152234} & \multicolumn{1}{c|}{1.161735} & 1.136027 \\ \cline{3-14} 
 &  & cGA-LRU-1 & \multicolumn{1}{c|}{1.09933} & \multicolumn{1}{c|}{1.099522} & \multicolumn{1}{c|}{1.106384} & \multicolumn{1}{c|}{1.130474} & \multicolumn{1}{c|}{1.136164} & \multicolumn{1}{c|}{1.134485} & \multicolumn{1}{c|}{1.140136} & \multicolumn{1}{c|}{1.147985} & \multicolumn{1}{c|}{1.142184} & \multicolumn{1}{c|}{1.161717} & 1.129838 \\ \cline{3-14} 
 &  & cGA(s=4)-FIFO-1 & \multicolumn{1}{c|}{1.146322} & \multicolumn{1}{c|}{1.174632} & \multicolumn{1}{c|}{1.188585} & \multicolumn{1}{c|}{1.191442} & \multicolumn{1}{c|}{1.223785} & \multicolumn{1}{c|}{1.212112} & \multicolumn{1}{c|}{1.215992} & \multicolumn{1}{c|}{1.22324} & \multicolumn{1}{c|}{1.238697} & \multicolumn{1}{c|}{1.25246} & 1.206727 \\ \cline{3-14} 
 &  & cGA(s=4)-LRU-1 & \multicolumn{1}{c|}{1.179128} & \multicolumn{1}{c|}{1.151283} & \multicolumn{1}{c|}{1.168803} & \multicolumn{1}{c|}{1.200772} & \multicolumn{1}{c|}{1.190794} & \multicolumn{1}{c|}{1.208921} & \multicolumn{1}{c|}{1.226716} & \multicolumn{1}{c|}{1.238037} & \multicolumn{1}{c|}{1.236} & \multicolumn{1}{c|}{1.240729} & 1.204118 \\ \cline{3-14} 
 &  & cGA(m=4)-FIFO-1 & 1.15798 & 1.171884 & 1.184903 & 1.193004 & 1.25353 & 1.235705 & 1.253358 & 1.284025 & 1.285236 & 1.310531 & 1.233016 \\ \cline{3-14} 
 &  & cGA(m=4)-LRU-1 & 1.147453 & 1.165698 & 1.194453 & 1.204318 & 1.259923 & 1.286538 & 1.268854 & 1.31177 & 1.30819 & 1.307925 & 1.245512 \\ \cline{3-14} 
 &  & pe-cGA-FIFO-1 & 1.247907 & 1.349061 & 1.388299 & 1.381977 & 1.388011 & 1.433262 & 1.449858 & 1.440612 & 1.487133 & 1.492519 & 1.405864 \\ \cline{3-14} 
 &  & pe-cGA-LRU-1 & 1.26255 & 1.323708 & 1.338192 & 1.392388 & 1.435047 & 1.381796 & 1.412682 & 1.467338 & 1.448383 & 1.462212 & 1.39243 \\ \cline{3-14} 
 &  & ne-cGA-FIFO-1 & 1.152543 & 1.165994 & 1.204509 & 1.203014 & 1.247509 & 1.26728 & 1.289592 & 1.349951 & 1.323853 & 1.330928 & 1.253517 \\ \cline{3-14} 
 &  & ne-cGA-LRU-1 & 1.129321 & 1.177163 & 1.185311 & 1.23371 & 1.254536 & 1.264112 & 1.292448 & 1.300415 & 1.297133 & 1.30918 & 1.244333 \\ \cline{2-14} 
 & \multirow{10}{*}{\begin{tabular}[c]{@{}l@{}}Cache\\ Length\\ 20\end{tabular}} & cGA-FIFO-20 & \multicolumn{1}{c|}{1.263752} & \multicolumn{1}{c|}{1.220316} & \multicolumn{1}{c|}{1.213912} & \multicolumn{1}{c|}{1.221164} & \multicolumn{1}{c|}{1.2276} & \multicolumn{1}{c|}{1.221705} & \multicolumn{1}{c|}{1.21899} & \multicolumn{1}{c|}{1.223251} & \multicolumn{1}{c|}{1.239207} & \multicolumn{1}{c|}{1.240605} & 1.22905 \\ \cline{3-14} 
 &  & cGA-LRU-20 & \multicolumn{1}{c|}{1.291489} & \multicolumn{1}{c|}{1.225683} & \multicolumn{1}{c|}{1.222075} & \multicolumn{1}{c|}{1.237722} & \multicolumn{1}{c|}{1.228923} & \multicolumn{1}{c|}{1.228977} & \multicolumn{1}{c|}{1.226533} & \multicolumn{1}{c|}{1.235726} & \multicolumn{1}{c|}{1.246045} & \multicolumn{1}{c|}{1.247598} & 1.239077 \\ \cline{3-14} 
 &  & cGA(s=4)-FIFO-20 & \multicolumn{1}{c|}{1.398257} & \multicolumn{1}{c|}{1.328254} & \multicolumn{1}{c|}{1.299434} & \multicolumn{1}{c|}{1.317232} & \multicolumn{1}{c|}{1.301606} & \multicolumn{1}{c|}{1.3347} & \multicolumn{1}{c|}{1.360498} & \multicolumn{1}{c|}{1.323036} & \multicolumn{1}{c|}{1.350091} & \multicolumn{1}{c|}{1.373344} & 1.338645 \\ \cline{3-14} 
 &  & cGA(s=4)-LRU-20 & \multicolumn{1}{c|}{1.419126} & \multicolumn{1}{c|}{1.342134} & \multicolumn{1}{c|}{1.33974} & \multicolumn{1}{c|}{1.305332} & \multicolumn{1}{c|}{1.343535} & \multicolumn{1}{c|}{1.326543} & \multicolumn{1}{c|}{1.320451} & \multicolumn{1}{c|}{1.357406} & \multicolumn{1}{c|}{1.346646} & \multicolumn{1}{c|}{1.346258} & 1.344717 \\ \cline{3-14} 
 &  & cGA(m=4)-FIFO-20 & 1.350915 & 1.33039 & 1.392382 & 1.358086 & 1.347451 & 1.317575 & 1.363336 & 1.406442 & 1.384858 & 1.413594 & 1.366503 \\ \cline{3-14} 
 &  & cGA(m=4)-LRU-20 & 1.372264 & 1.349103 & 1.358501 & 1.35065 & 1.345961 & 1.375539 & 1.395011 & 1.372171 & 1.390402 & 1.380012 & 1.368961 \\ \cline{3-14} 
 &  & pe-cGA-FIFO-20 & 1.552531 & 1.650639 & 1.683767 & 1.786602 & 1.808992 & 1.758137 & 1.838419 & 1.801079 & 1.892916 & 1.840972 & 1.761405 \\ \cline{3-14} 
 &  & pe-cGA-LRU-20 & 1.487651 & 1.660126 & 1.676022 & 1.760365 & 1.771821 & 1.82682 & 1.919083 & 1.842634 & 1.855574 & 1.939549 & 1.773965 \\ \cline{3-14} 
 &  & ne-cGA-FIFO-20 & 1.319353 & 1.340889 & 1.380391 & 1.368152 & 1.391112 & 1.406689 & 1.43053 & 1.435305 & 1.476289 & 1.481144 & 1.402985 \\ \cline{3-14} 
 &  & ne-cGA-LRU-20 & 1.297142 & 1.327561 & 1.345399 & 1.386069 & 1.390752 & 1.418793 & 1.409061 & 1.462645 & 1.505725 & 1.502514 & 1.404566 \\ \hline
\multirow{20}{*}{\begin{tabular}[c]{@{}c@{}}Binary \\ Integer\\ Problem\end{tabular}} & \multirow{10}{*}{\begin{tabular}[c]{@{}l@{}}Cache \\ Length\\ 1\end{tabular}} & cGA-FIFO-1 & 1.16979 & 1.172266 & 1.150451 & 1.152673 & 1.166284 & 1.150981 & 1.16108 & 1.162932 & 1.157729 & 1.168369 & 1.161256 \\ \cline{3-14} 
 &  & cGA-LRU-1 & 1.184691 & 1.151391 & 1.158543 & 1.155135 & 1.167223 & 1.14505 & 1.15614 & 1.164868 & 1.154117 & 1.157258 & 1.159442 \\ \cline{3-14} 
 &  & cGA(s=4)-FIFO-1 & 1.260282 & 1.25177 & 1.259705 & 1.253052 & 1.228391 & 1.240303 & 1.257315 & 1.257037 & 1.281902 & 1.265296 & 1.255505 \\ \cline{3-14} 
 &  & cGA(s=4)-LRU-1 & 1.230771 & 1.266367 & 1.234947 & 1.245142 & 1.221043 & 1.284248 & 1.262408 & 1.25774 & 1.260816 & 1.269963 & 1.253345 \\ \cline{3-14} 
 &  & cGA(m=4)-FIFO-1 & 1.229064 & 1.218781 & 1.209763 & 1.198236 & 1.235635 & 1.204047 & 1.21714 & 1.243012 & 1.240044 & 1.203013 & 1.219874 \\ \cline{3-14} 
 &  & cGA(m=4)-LRU-1 & 1.257657 & 1.261423 & 1.215217 & 1.211751 & 1.264149 & 1.210676 & 1.232505 & 1.203418 & 1.216826 & 1.214645 & 1.228827 \\ \cline{3-14} 
 &  & pe-cGA-FIFO-1 & 1.295234 & 1.36425 & 1.404062 & 1.451323 & 1.50718 & 1.547016 & 1.48251 & 1.5731 & 1.556566 & 1.577982 & 1.475922 \\ \cline{3-14} 
 &  & pe-cGA-LRU-1 & 1.319367 & 1.339874 & 1.46564 & 1.464298 & 1.445212 & 1.514818 & 1.486544 & 1.565565 & 1.541142 & 1.568835 & 1.47113 \\ \cline{3-14} 
 &  & ne-cGA-FIFO-1 & 1.192062 & 1.232879 & 1.271284 & 1.299795 & 1.316486 & 1.339061 & 1.379007 & 1.432261 & 1.393953 & 1.413282 & 1.327007 \\ \cline{3-14} 
 &  & ne-cGA-LRU-1 & 1.217918 & 1.258857 & 1.287642 & 1.284838 & 1.312982 & 1.345824 & 1.376772 & 1.360448 & 1.432775 & 1.44232 & 1.332038 \\ \cline{2-14} 
 & \multirow{10}{*}{\begin{tabular}[c]{@{}l@{}}Cache \\ Length\\ 20\end{tabular}} & cGA-FIFO-20 & 1.544273 & 1.512646 & 1.453535 & 1.436836 & 1.402085 & 1.432222 & 1.402666 & 1.396865 & 1.403451 & 1.407687 & 1.439227 \\ \cline{3-14} 
 &  & cGA-LRU-20 & 1.58412 & 1.485608 & 1.456194 & 1.447313 & 1.458603 & 1.413239 & 1.440135 & 1.417762 & 1.396936 & 1.427347 & 1.452726 \\ \cline{3-14} 
 &  & cGA(s=4)-FIFO-20 & 1.63062 & 1.650399 & 1.59405 & 1.578122 & 1.575991 & 1.587032 & 1.599778 & 1.559781 & 1.570364 & 1.562222 & 1.590836 \\ \cline{3-14} 
 &  & cGA(s=4)-LRU-20 & 1.732065 & 1.617206 & 1.636188 & 1.564301 & 1.628552 & 1.588255 & 1.563205 & 1.577034 & 1.583888 & 1.60592 & 1.609661 \\ \cline{3-14} 
 &  & cGA(m=4)-FIFO-20 & 1.782668 & 1.603246 & 1.584661 & 1.556634 & 1.481432 & 1.519929 & 1.525975 & 1.521023 & 1.460363 & 1.526995 & 1.556293 \\ \cline{3-14} 
 &  & cGA(m=4)-LRU-20 & 1.600072 & 1.624054 & 1.598647 & 1.622701 & 1.537502 & 1.576676 & 1.543509 & 1.514601 & 1.540614 & 1.518711 & 1.567709 \\ \cline{3-14} 
 &  & pe-cGA-FIFO-20 & 1.797184 & 1.991112 & 2.057194 & 2.099994 & 2.156264 & 2.064703 & 2.24526 & 2.163161 & 2.210175 & 2.279559 & 2.106461 \\ \cline{3-14} 
 &  & pe-cGA-LRU-20 & 1.831156 & 1.954343 & 2.074114 & 2.169858 & 2.126215 & 2.194234 & 2.176215 & 2.341222 & 2.275895 & 2.291566 & 2.143482 \\ \cline{3-14} 
 &  & ne-cGA-FIFO-20 & 1.563835 & 1.615645 & 1.661901 & 1.671523 & 1.699175 & 1.748135 & 1.675225 & 1.742129 & 1.754192 & 1.759943 & 1.68917 \\ \cline{3-14} 
 &  & ne-cGA-LRU-20 & 1.56987 & 1.655711 & 1.717918 & 1.682804 & 1.738595 & 1.797893 & 1.758643 & 1.816472 & 1.835724 & 1.817665 & 1.73913 \\ \hline
\end{tabular}%
}
\end{table*}

In this section, the performance of the proposed cache-based techniques is compared with that of original cGA, pe-cGA, and ne-cGA based on two benchmark functions, defined in Appendix A. Here cGA, pe-cGA, and ne-cGA will be referred to as non-cache-based algorithms. As the convergence and accuracy are the same for both cache-based and non-cache-based algorithms, we discuss and compare our experimental results in terms of the total number of fitness function evaluations. All experimental results are averaged over 50 runs. All runs end when the population fully converges, that is, when for each gene position, all the population members have the same allele value (either 0 or 1). From here onwards, $x$-FIFO and $x$-LRU will be used to denote the cache-based (our proposed) algorithm of algorithm-$x$ using FIFO and LRU as the cache replacement strategies, respectively. For example, cGA-FIFO represents the cache-based version of cGA using FIFO as the cache replacement policy. Similarly, pe-cGA-LRU denotes the cache-based algorithm of pe-cGA using LRU as the cache replacement strategy. Moreover, for convenience, $x$-FIFO-$y$ and $x$-LRU-$y$ are used to denote $x$-FIFO and $x$-LRU, respectively, with cache capacity of $y$. For example, cGA-FIFO-1 denotes cGA with FIFO as the cache replacement strategy for cache capacity 1.

Experiments are also conducted on cGA with higher selection pressures. cGA with a tournament of size $x$, and cGA with round-robin tournament of size $x$, are represented by cGA($s=x$) and cGA($m=x$), respectively. The details of these selection techniques have already been discussed in Section~\ref{sec:cGA}.

\subsection{Experiments with Respect to Different Population Sizes}

\begin{figure*}
     \centering
     \begin{subfigure}{\subplotsize\linewidth}
         \includegraphics[width=\linewidth]{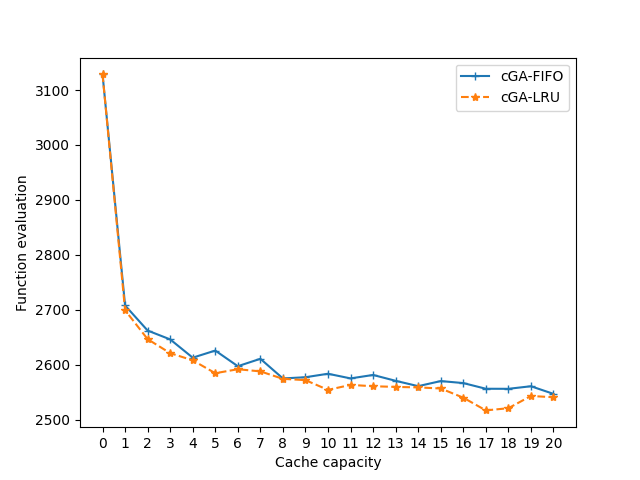}
         \caption{100-bit onemax problem}
         \label{fig:onemax_cGA_eval_lc}
     \end{subfigure}
     \begin{subfigure}{\subplotsize\linewidth}
         \includegraphics[width=\linewidth]{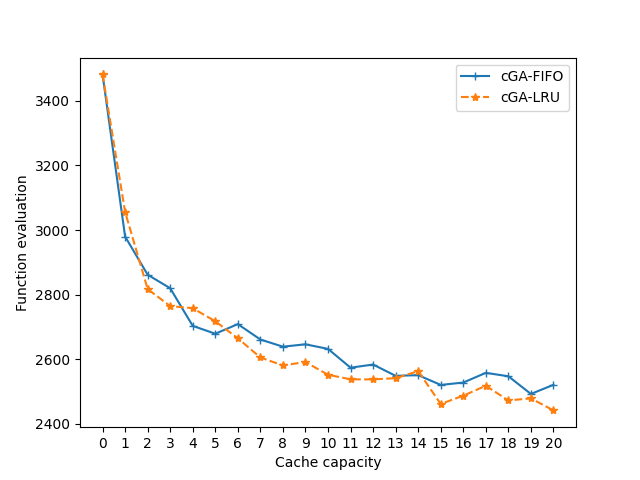}
         \caption{30-bit binary integer problem}
         \label{fig:binint_cGA_eval_lc}
     \end{subfigure}
     \begin{subfigure}{\subplotsize\linewidth}
         \includegraphics[width=\linewidth]{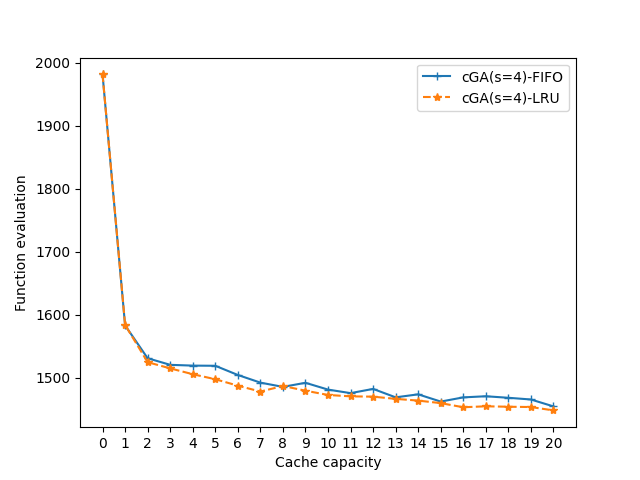}
         \caption{100-bit onemax problem}
         \label{fig:onemax_cGAt4_eval_lc}
     \end{subfigure}
     \begin{subfigure}{\subplotsize\linewidth}
         \includegraphics[width=\linewidth]{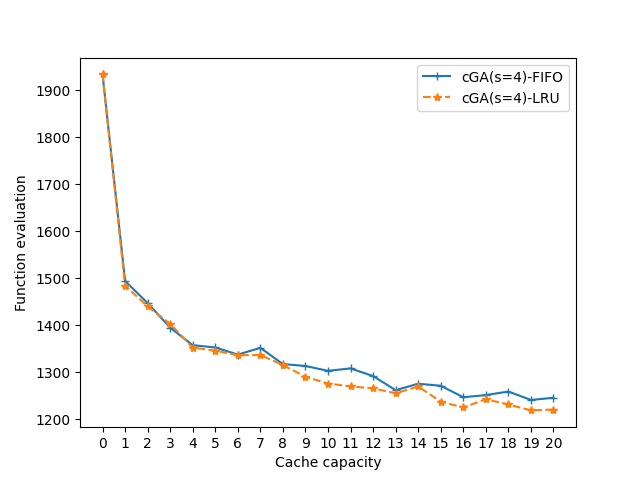}
         \caption{30-bit binary integer problem}
         \label{fig:binint_cGAt4_eval_lc}
     \end{subfigure}
     \begin{subfigure}{\subplotsize\linewidth}
         \includegraphics[width=\linewidth]{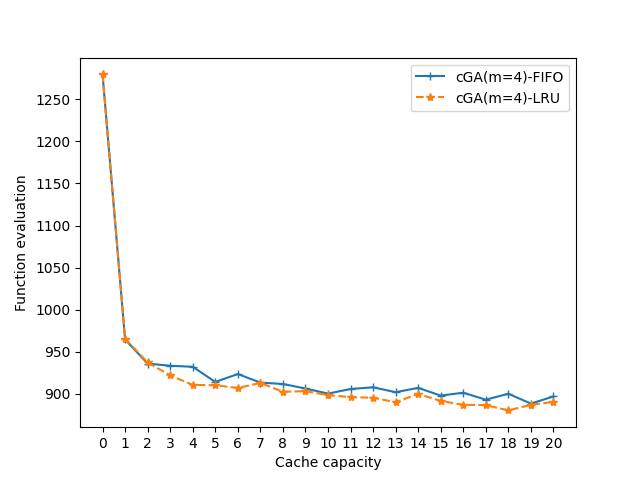}
         \caption{100-bit onemax problem}
         \label{fig:onemax_cGArrt4_eval_lc}
     \end{subfigure}
     \begin{subfigure}{\subplotsize\linewidth}
         \includegraphics[width=\linewidth]{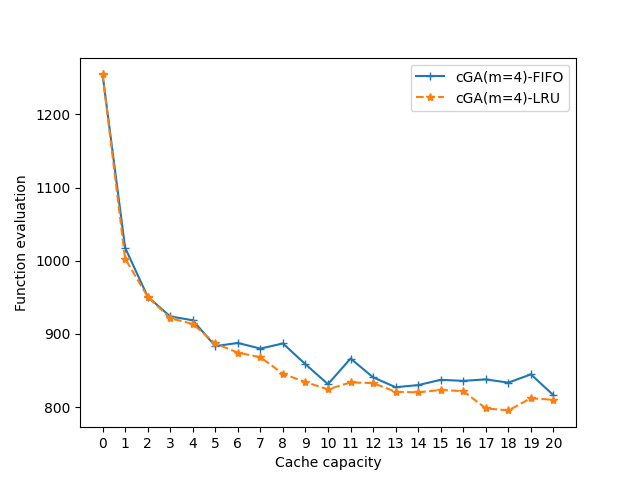}
         \caption{30-bit binary integer}
         \label{fig:binint_cGArrt4_eval_lc}
     \end{subfigure}
     \caption{The plots illustrate the number of fitness function evaluations by cGA and cGA with cache for the 100-bit onemax problem and 30-bit binary integer problem with a population size of 100. The algorithms were executed for cache capacities varying from 0 to 20 with an increment of 1}
        \label{fig:cGA_lc}
\end{figure*}

\begin{figure*}
     \centering
     \begin{subfigure}{\subplotsize\linewidth}
         \includegraphics[width=\linewidth]{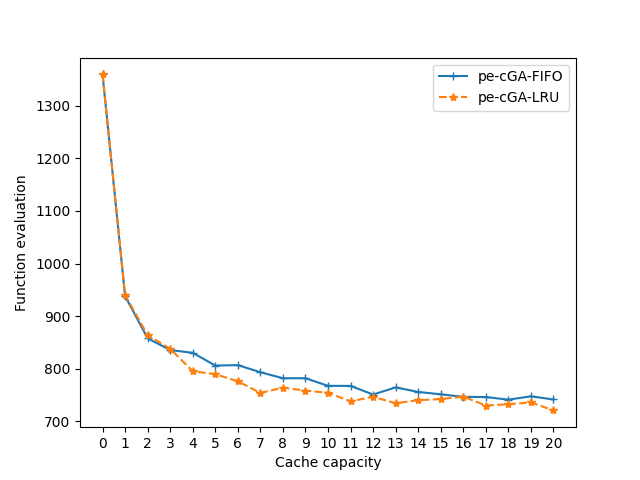}
         \caption{100-bit onemax problem}
         \label{fig:onemax_pecGA_eval_lc}
     \end{subfigure}
     \begin{subfigure}{\subplotsize\linewidth}
         \includegraphics[width=\linewidth]{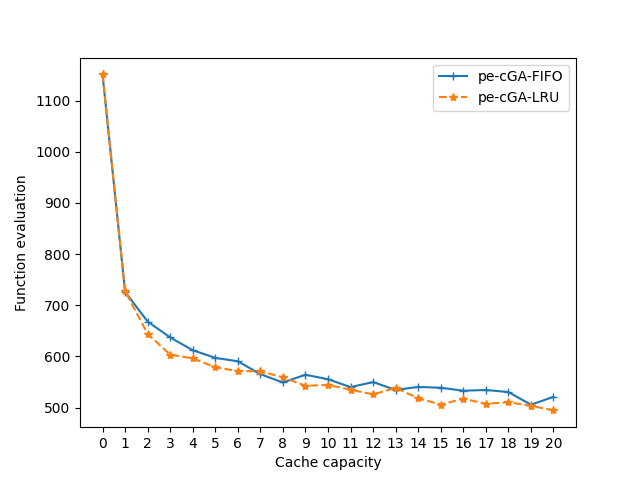}
         \caption{30-bit binary integer problem}
         \label{fig:binint_pecGA_eval_lc}
     \end{subfigure}
     \begin{subfigure}{\subplotsize\linewidth}
         \includegraphics[width=\linewidth]{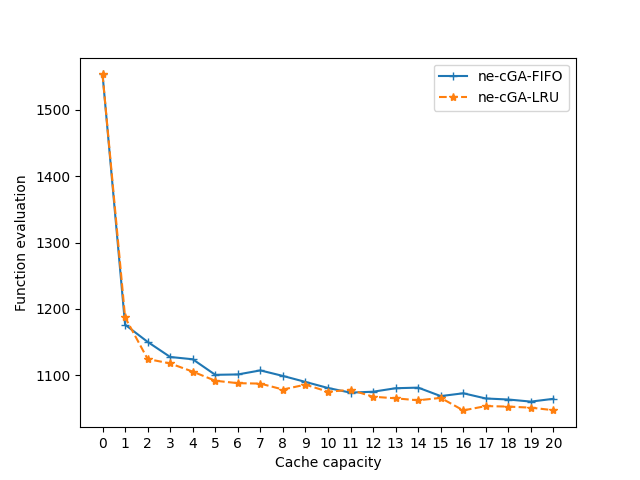}
         \caption{100-bit onemax problem}
         \label{fig:onemax_necGA_eval_lc}
     \end{subfigure}
     \begin{subfigure}{\subplotsize\linewidth}
         \includegraphics[width=\linewidth]{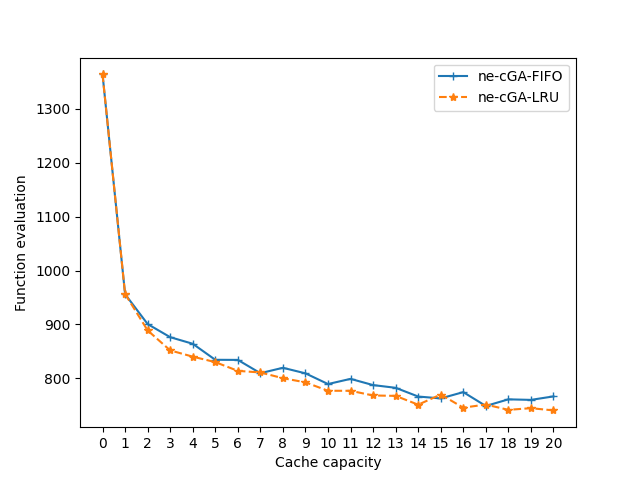}
         \caption{30-bit binary integer problem}
         \label{fig:binint_necGA_eval_lc}
     \end{subfigure}
     \caption{The plots illustrate the number of fitness function evaluations by elitism-based cGA and elitism-based cGA with cache for the 100-bit onemax problem and 30-bit binary integer problem with a population size of 100. The algorithms were executed for cache capacities varying from 0 to 20 with an increment of 1}
        \label{fig:elite_cGA_lc}
\end{figure*}

\begin{figure*}
     \centering
     \begin{subfigure}{\subplotsize\linewidth}
         \includegraphics[width=\linewidth]{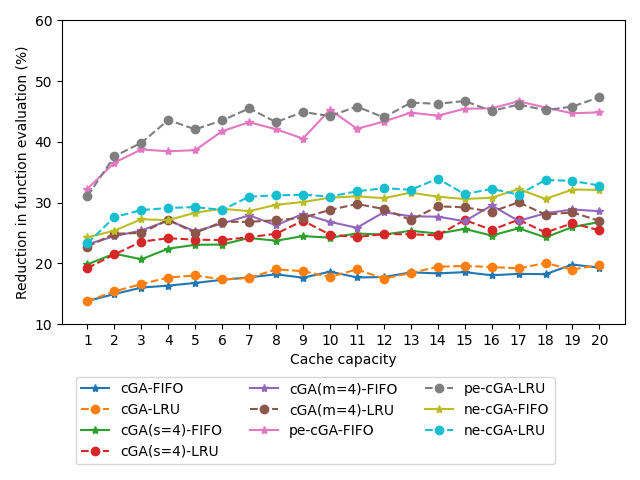}
         \caption{100-bit onemax problem}
         \label{fig:onemax_hit_lc}
     \end{subfigure}
     \begin{subfigure}{\subplotsize\linewidth}
         \includegraphics[width=\linewidth]{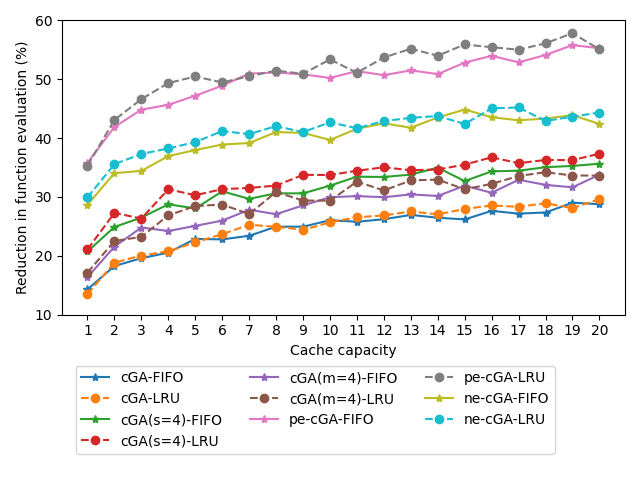}
         \caption{30-bit binary integer problem}
         \label{fig:binint_hit_lc}
     \end{subfigure}
     \caption{The plots illustrate the percentage of reduction in fitness function evaluation due to the use of cache by cGA and elitism-based cGA for the 100-bit onemax problem and 30-bit binary integer problem with a population size of 100. The algorithms were executed for the cache capacity varying from 0 to 20 with an increment of 1}
        \label{fig:hit_lc}
\end{figure*}

\begin{table*}
\centering
\caption{Speedup achieved in terms of the number of fitness function evaluations due to the use of cache by cGA and elitism-based cGA with respect to different cache capacities}
\label{tab:speedup_lc}
\resizebox{\textwidth}{!}{%
\begin{tabular}{|l|l|l|l|l|l|l|l|l|l|l|l|l|l|l|l|l|l|l|l|l|l|l|}
\hline
\multicolumn{2}{|l|}{Cache capacities =\textgreater{}} & 1 & 2 & 3 & 4 & 5 & 6 & 7 & 8 & 9 & 10 & 11 & 12 & 13 & 14 & 15 & 16 & 17 & 18 & 19 & 20 & Avg. \\ \hline
\multirow{10}{*}{\begin{tabular}[c]{@{}l@{}}Onemax\\ Problem\end{tabular}} & cGA-FIFO & 1.16 & 1.18 & 1.19 & 1.2 & 1.2 & 1.21 & 1.22 & 1.23 & 1.22 & 1.23 & 1.22 & 1.22 & 1.23 & 1.23 & 1.23 & 1.22 & 1.23 & 1.22 & 1.25 & 1.24 & 1.22 \\ \cline{2-23} 
 & cGA-LRU & 1.16 & 1.18 & 1.2 & 1.22 & 1.22 & 1.21 & 1.21 & 1.24 & 1.23 & 1.22 & 1.24 & 1.21 & 1.23 & 1.24 & 1.25 & 1.24 & 1.24 & 1.25 & 1.24 & 1.25 & 1.22 \\ \cline{2-23} 
 & cGA(s=4)-FIFO & 1.25 & 1.28 & 1.26 & 1.29 & 1.31 & 1.3 & 1.33 & 1.32 & 1.33 & 1.33 & 1.34 & 1.33 & 1.35 & 1.34 & 1.35 & 1.33 & 1.35 & 1.33 & 1.36 & 1.37 & 1.32 \\ \cline{2-23} 
 & cGA(s=4)-LRU & 1.24 & 1.28 & 1.32 & 1.32 & 1.32 & 1.32 & 1.33 & 1.34 & 1.38 & 1.33 & 1.33 & 1.33 & 1.33 & 1.33 & 1.38 & 1.35 & 1.38 & 1.34 & 1.37 & 1.35 & 1.33 \\ \cline{2-23} 
 & cGA(m=4)-FIFO & 1.31 & 1.33 & 1.35 & 1.39 & 1.34 & 1.37 & 1.41 & 1.36 & 1.4 & 1.38 & 1.36 & 1.41 & 1.4 & 1.4 & 1.38 & 1.43 & 1.38 & 1.41 & 1.42 & 1.41 & 1.38 \\ \cline{2-23} 
 & cGA(m=4)-LRU & 1.31 & 1.34 & 1.34 & 1.39 & 1.35 & 1.38 & 1.38 & 1.38 & 1.39 & 1.42 & 1.44 & 1.42 & 1.39 & 1.43 & 1.42 & 1.41 & 1.44 & 1.4 & 1.41 & 1.38 & 1.39 \\ \cline{2-23} 
 & pe-cGA-FIFO & 1.49 & 1.6 & 1.65 & 1.65 & 1.64 & 1.75 & 1.79 & 1.75 & 1.71 & 1.85 & 1.75 & 1.79 & 1.85 & 1.83 & 1.86 & 1.86 & 1.9 & 1.86 & 1.84 & 1.84 & 1.76 \\ \cline{2-23} 
 & pe-cGA-LRU & 1.46 & 1.63 & 1.69 & 1.8 & 1.76 & 1.8 & 1.86 & 1.79 & 1.85 & 1.82 & 1.88 & 1.82 & 1.91 & 1.89 & 1.91 & 1.85 & 1.89 & 1.85 & 1.87 & 1.94 & 1.81 \\ \cline{2-23} 
 & ne-cGA-FIFO & 1.33 & 1.35 & 1.39 & 1.38 & 1.4 & 1.42 & 1.41 & 1.43 & 1.44 & 1.45 & 1.47 & 1.45 & 1.47 & 1.45 & 1.45 & 1.46 & 1.49 & 1.45 & 1.48 & 1.48 & 1.43 \\ \cline{2-23} 
 & ne-cGA-LRU & 1.31 & 1.39 & 1.41 & 1.42 & 1.43 & 1.41 & 1.46 & 1.46 & 1.46 & 1.46 & 1.48 & 1.49 & 1.48 & 1.52 & 1.46 & 1.48 & 1.46 & 1.52 & 1.52 & 1.5 & 1.46 \\ \hline
\multirow{10}{*}{\begin{tabular}[c]{@{}l@{}}Binary\\ Integer\\ Problem\end{tabular}} & cGA-FIFO & 1.17 & 1.23 & 1.25 & 1.26 & 1.3 & 1.3 & 1.31 & 1.34 & 1.34 & 1.36 & 1.35 & 1.36 & 1.37 & 1.36 & 1.36 & 1.39 & 1.38 & 1.38 & 1.41 & 1.41 & 1.33 \\ \cline{2-23} 
 & cGA-LRU & 1.16 & 1.24 & 1.25 & 1.26 & 1.29 & 1.31 & 1.34 & 1.34 & 1.33 & 1.35 & 1.36 & 1.37 & 1.38 & 1.37 & 1.39 & 1.41 & 1.4 & 1.41 & 1.4 & 1.43 & 1.34 \\ \cline{2-23} 
 & cGA(s=4)-FIFO & 1.27 & 1.34 & 1.37 & 1.41 & 1.4 & 1.46 & 1.43 & 1.45 & 1.45 & 1.48 & 1.51 & 1.51 & 1.52 & 1.55 & 1.5 & 1.53 & 1.53 & 1.55 & 1.56 & 1.56 & 1.47 \\ \cline{2-23} 
 & cGA(s=4)-LRU & 1.27 & 1.38 & 1.37 & 1.47 & 1.44 & 1.46 & 1.47 & 1.48 & 1.52 & 1.52 & 1.53 & 1.55 & 1.54 & 1.54 & 1.57 & 1.6 & 1.56 & 1.58 & 1.58 & 1.61 & 1.5 \\ \cline{2-23} 
 & cGA(m=4)-FIFO & 1.2 & 1.28 & 1.34 & 1.33 & 1.35 & 1.36 & 1.4 & 1.38 & 1.41 & 1.44 & 1.45 & 1.44 & 1.45 & 1.45 & 1.49 & 1.46 & 1.51 & 1.48 & 1.48 & 1.53 & 1.41 \\ \cline{2-23} 
 & cGA(m=4)-LRU & 1.21 & 1.3 & 1.31 & 1.38 & 1.41 & 1.41 & 1.39 & 1.46 & 1.43 & 1.43 & 1.51 & 1.47 & 1.52 & 1.51 & 1.46 & 1.49 & 1.52 & 1.53 & 1.52 & 1.52 & 1.44 \\ \cline{2-23} 
 & pe-cGA-FIFO & 1.58 & 1.76 & 1.84 & 1.88 & 1.93 & 2 & 2.08 & 2.1 & 2.07 & 2.03 & 2.08 & 2.07 & 2.1 & 2.07 & 2.16 & 2.21 & 2.17 & 2.23 & 2.31 & 2.28 & 2.05 \\ \cline{2-23} 
 & pe-cGA-LRU & 1.57 & 1.79 & 1.91 & 2.01 & 2.06 & 2.02 & 2.06 & 2.1 & 2.08 & 2.22 & 2.09 & 2.21 & 2.28 & 2.22 & 2.34 & 2.29 & 2.28 & 2.32 & 2.41 & 2.29 & 2.13 \\ \cline{2-23} 
 & ne-cGA-FIFO & 1.41 & 1.53 & 1.54 & 1.6 & 1.63 & 1.66 & 1.67 & 1.71 & 1.71 & 1.67 & 1.73 & 1.76 & 1.74 & 1.79 & 1.84 & 1.79 & 1.77 & 1.79 & 1.8 & 1.76 & 1.69 \\ \cline{2-23} 
 & ne-cGA-LRU & 1.44 & 1.57 & 1.62 & 1.63 & 1.67 & 1.72 & 1.72 & 1.74 & 1.72 & 1.78 & 1.74 & 1.77 & 1.79 & 1.81 & 1.76 & 1.84 & 1.85 & 1.77 & 1.8 & 1.82 & 1.73 \\ \hline
\end{tabular}%
}
\end{table*}

Fig.~\ref{fig:cGA_n} shows the results of experiments on cGA, cGA with cache having a length of 1 and 20, and with FIFO and LRU as the cache replacement strategies. The subplots in the first column show the experimental results on 100-bit onemax problem, and the subplots in the second column show the experimental results on 30-bit binary integer problem. All the experiments were done by varying the population size from 10 to 100 with step size 10. Fig.~\ref{fig:onemax_cGA_eval_n} and Fig.~\ref{fig:binint_cGA_eval_n} show the number of fitness function evaluations by cGA, cGA-FIFO-1, cGA-LRU-1, cGA-FIFO-20, and cGA-LRU-20. The two figures show that cGA with cache (cGA-FIFO-1, cGA-LRU-1, cGA-FIFO-20, cGA-LRU-20) has a lesser number of fitness function evaluations compared to that of cGA without cache. This is also evident from the gap between the curves. As the population size increases, the gap between them also increases. This is because the probability vector in cGA is updated with the reciprocal value of the population size. As the population size increases, the probability vector's convergence rate slows down. This increases the frequency of cache hits because the chromosomes are generated based on the probability vector. As a larger cache can enhance the locality of reference, the number of fitness function evaluations is much lower for cGA-LRU-20 for both problems. However, one observation here is that even a cache of length 1 can improve the performance significantly. Here the performance of the two cache replacement strategies - FIFO and LRU are comparably similar. LRU performs slightly better, which can be observed from the population size of 100 in Fig.~\ref{fig:binint_cGA_eval_n}. Our proposed method also works well in the case of cGA with tournament selection of size 4 (Fig.~\ref{fig:onemax_cGAt4_eval_n} and Fig.~\ref{fig:binint_cGAt4_eval_n}) and cGA with round-robin tournament selection of size 4 (Fig.~\ref{fig:onemax_cGArrt4_eval_n} and Fig.~\ref{fig:binint_cGArrt4_eval_n}) in terms of number of the fitness function evaluations.

In elitism-based cGA, the number of fitness function evaluations is reduced by preserving a dominant chromosome, called the elite chromosome. Non-persistent elitist cGA replaces the elite chromosome after some predefined number of iterations to avoid the scenario of premature convergence. Fig.~\ref{fig:elite_cGA_n} presents the number of fitness function evaluations by persistent elitist cGA (pe-cGA) and non-persistent elitist cGA (ne-cGA) for 100-bit onemax problem (first column) and 30-bit binary integer problem (second column). Since here the elite chromosome is preserved separately, our proposed cache-based algorithms like pe-cGA-FIFO-1, pe-cGA-LRU-1, pe-cGA-FIFO-20, pe-cGA-LRU-20, ne-cGA-FIFO-1, ne-cGA-LRU-1, ne-cGA-FIFO-20 and ne-cGA-LRU-20 perform better, as is evident from the figure.

Table~\ref{tab:speedup_n} reports the {\em Speedup} achieved by cache-based algorithms over that by the corresponding non-cache-based algorithms in terms of the number of fitness function evaluations. {\em Speedup} is calculated with respect to different population sizes from 10 to 100 with step size 10. The average is also calculated across all the population sizes for a particular algorithm, particular problem, and particular cache capacity. From the table, it is clear that cache-based algorithms perform better than corresponding non-cache-based algorithms. The performance improvement is more significant in the case of pe-CGA-FIFO, pe-cGA-LRU, ne-cGA-FIFO, and ne-cGA-LRU.

Fig.~\ref{fig:hit_n} shows the relative performance improvement. \textit{Percentage of reduction in number of fitness function evaluations} or simply \textit{reduction in function evaluations (\%)} = $\frac{neval-neval_{cache}}{neval} \times 100$, where $neval_{cache}$ and $neval$ are the number of function evaluations with and without cache, respectively. Fig.~\ref{fig:onemax_hit_c1_n} and Fig.~\ref{fig:binint_hit_c1_n} show the reduction in function evaluations (\%) for the cache capacity of 1, whereas, Fig.~\ref{fig:onemax_hit_c20_n} and Fig.~\ref{fig:binint_hit_c20_n} provide the reduction in function evaluations (\%) for the cache capacity of 20. As expected, performance improvement for the cache capacity of 20 is much bigger than that for the cache capacity of 1. However, irrespective of the cache capacity and the problem, a clear hierarchy can be observed in the results. In elitism-based cGA with cache (pe-cGA-FIFO, pe-cGA-LRU, ne-cGA-FIFO, and ne-cGA-LRU), the elite chromosome influences the probability vector. Therefore, the highest performance improvement can be observed here. Then comes the performance improvement by cGA with round-robin tournament selection and tournament selection due to their larger selection pressure. In the original cGA, as the tournament size is 2, it comes last in the hierarchy.

\subsection{Experiments with Respect to Different Cache Capacities}

Fig.~\ref{fig:cGA_lc} and Fig.~\ref{fig:elite_cGA_lc} illustrate the number of function evaluations by different algorithms with respect to cache capacities from 0 to 20 with a step size of 1. Here cache capacity 0 means the algorithm is not using any cache. From the subfigures, it is clear that a cache capacity of 1 reduces the fitness function evaluations significantly. As the cache capacity increases, the number of fitness function evaluations reduces. After a certain cache capacity, saturation can be observed in function evaluations. This means function evaluation does not reduce significantly with the increase of cache capacity. The scenario can be easily understood from Fig.~\ref{fig:hit_lc} which shows the percentage of reduction in the number of fitness function evaluations with respect to cache capacity. Table~\ref{tab:speedup_lc} also shows the speedup achieved by cache-based algorithms than the corresponding non-cache-based algorithms. 

\section{Conclusion}\label{conclude}
In this article, we have proposed an improved cGA with an efficient caching mechanism. The proposed technique uses the concept of locality of reference while evaluating new chromosomes. The experimental results have shown that cGA and the elitism-based cGA with caching perform far better than the original cGA and elitism-based cGA in terms of the number of function evaluations. It reduces the number of fitness function evaluations significantly, even when the cache capacity of only 1. We have explained the performance of our proposed method based on different metrics that are independent of the objective function. We have also presented a suitable data structure for maintaining the cache with minimum system overhead and in asymptotically constant time complexity on average while retaining the same accuracy and convergence rate of the respective algorithms. The proposed cache-based algorithms are applicable to all the problems where cGA and elitism-based cGAs are applicable. This will be particularly useful for the problems for which fitness computation is a time/space consuming procedure.

\appendices
\section{Problems}
This appendix presents some problems, based on which our experiments are performed.
\subsection*{Onemax Problem}
It is a maximization problem. For a $l$-bit onemax problem, a string of length $l$ is considered. The objective is to find a string of length $l$ with a maximum number of 1s. The fitness value of a string $x$ is evaluated as the number of 1s in that string. The strings and fitnesses of 2-bit onemax problem are shown below: 
\begin{center}
\begin{tabular}{ c c }
\hline
 String & Fitness \\\hline
 00 & 0 \\
 01 & 1 \\
 10 & 1 \\
 11 & 2\\\hline
\end{tabular}
\end{center}
\subsection*{Binary Integer Problem}
It is a maximization problem. For a $l$-bit binary integer problem, a $l$-bit binary string is considered. The fitness of a binary string is the equivalent decimal value of that string. The objective is to find a $l$-bit binary string with maximum decimal value. The strings and fitnesses of 2-bit binary integer problem with fitnesses are shown below: 
\begin{center}
\begin{tabular}{ c c }
\hline
 String & Fitness \\\hline
 00 & 0 \\
 01 & 1 \\
 10 & 2 \\
 11 & 3\\\hline
\end{tabular}
\end{center}
\bibliographystyle{IEEEtran}
\bibliography{references}
\end{document}